\newtheorem{lemma}{Lemma}
\newtheorem{definition}{Definition}
\newtheorem{proposition}{Proposition}
\ificcvfinal\pagestyle{empty}\fi
\begin{document}

%%%%%%%%% TITLE
\title{Deep Directly-Trained Spiking Neural Networks for Object Detection}

% \author{Qiaoyi Su, Yuhong Chou, Yifan Hu, Jianing Li, Shijie Mei, Ziyang Zhang, Guoqi Li
% % For a paper whose authors are all at the same institution,
% % omit the following lines up until the closing ``}''.
% % Additional authors and addresses can be added with ``\and'',
% % just like the second author.
% % To save space, use either the email address or home page, not both
% \and
% {\tt\small suqiaoyi2020@ia.ac.cn, guoqi.li@ia.ac.cn}
% }
\author[1]{Qiaoyi Su}
\author[2,7]{Yuhong Chou}
\author[3]{Yifan Hu}
\author[4]{Jianing Li}
\author[5,7]{Shijie Mei}
\author[6]{Ziyang Zhang}
\author[7]{Guoqi Li}
\affil[1]{School of Artificial Intelligence, University of Chinese Academy of Sciences}
\affil[2]{College of Artificial Intelligence, Xi’an Jiaotong University}
\affil[3]{Department of Precision Instrument, Tsinghua University}
\affil[4]{School of Computer Science, Peking University}
\affil[5]{School of Vehicle and Mobility, Tsinghua University}
\affil[6]{Advanced Computing and Storage Lab, Huawei Technologies Co Ltd.}
\affil[7]{Institute of Automation, Chinese Academy of Sciences}
\renewcommand*{\Affilfont}{\small\it}
\maketitle
% Remove page # from the first page of camera-ready.
\ificcvfinal\thispagestyle{empty}\fi

%%%%%%%%% ABSTRACT
\begin{abstract}
   Spiking neural networks (SNNs) are brain-inspired energy-efficient models that encode information in spatiotemporal dynamics. Recently, deep SNNs trained directly have shown great success in achieving high performance on classification tasks with very few time steps. However, how to design a directly-trained SNN for the regression task of object detection still remains a challenging problem. To address this problem, we propose EMS-YOLO, a novel directly-trained SNN framework for object detection, which is the first trial to train a deep SNN with surrogate gradients for object detection rather than ANN-SNN conversion strategies. Specifically, we design a full-spike residual block, EMS-ResNet, which can effectively extend the depth of the directly-trained SNN with low power consumption. Furthermore, we theoretically analyze and prove the EMS-ResNet could avoid gradient vanishing or exploding. The results demonstrate that our approach outperforms the state-of-the-art ANN-SNN conversion methods (at least 500 time steps) in extremely fewer time steps (only 4 time steps). It is shown that our model could achieve comparable performance to the ANN with the same architecture while consuming 5.83$\times$ less energy on the frame-based COCO Dataset and the event-based Gen1 Dataset. Our code is available in \url{https://github.com/BICLab/EMS-YOLO}.
\end{abstract}

%%%%%%%%% BODY TEXT
\section{Introduction}

Object detection is a key and challenging problem in computer vision. It aims to recognize multiple overlapped objects and locate them with precise bounding boxes. This task has many applications in various fields, such as autonomous driving~\cite{balasubramaniam2022object}, security surveillance~\cite{zhang2019object}, and medical imaging~\cite{litjens2017survey}. Most existing frameworks (\eg, YOLO series~\cite{redmon2016you}, RCNN series~\cite{girshick2014rich}) for object detection use artificial neural networks (ANNs), which have high performance but also high computational complexity and energy consumption. Spiking neural networks (SNNs), known as the third generation of neural networks~\cite{maass1997networks,roy2019towards}, potentially serves as a more efficient and biologically inspired way to perform object detection. Specifically, SNNs utilize binary signals (spikes) instead of continuous signals for neuron communication, which reduces data transfer and storage overhead. Furthermore, the SNNs exhibit asynchronous computation and event-driven communication, which could avoid unnecessary computation and synchronization overhead. When deployed on neuromorphic hardware~\cite{merolla2014million,poon2011neuromorphic}, SNNs show great energy efficiency. 

% \begin{figure}[t]
% \begin{center}
% \fbox{\rule{0pt}{2in} \rule{0.9\linewidth}{0pt}}
%   %\includegraphics[width=0.8\linewidth]{egfigure.eps}
% \end{center}
%   \caption{The event representation and overview of the ODSNN. Our approach can be used on the static images and event data.
%   (whether need???)}
% \label{fig:long}
% \label{fig:onecol}
% \end{figure}

However, most SNNs for object detection are converted from ANNs, which have some limitations. For example, Spiking-Yolo~\cite{kim2020spiking,kim2020towards} needs at least 3500 time steps to match the performance of the original ANN. Spike Calibration~\cite{li2022spike} can reduce the time steps to hundreds, but it still depends on the performance of the original ANN model. Moreover,  most methods for converted SNNs are only applicable for static images, and not suitable for sparse event datas, because their dynamics are designed to approximate the expected activation of the ANN and fail to capture the spatiotemporal information of DVS data~\cite{deng2020rethinking}. A promising approach is to train SNNs directly with surrogate gradient, which can achieve high performance with few time steps and process both static images and event data efficiently.

Another challenge is to deal with the multi-scale object features in object detection, which demands the network has sufficient representation capability. Correspondingly, deep structure training is needed. Existing models on object detection are limited to shallow structures~\cite{cordone2022object} or hybrid structures~\cite{lien2022sparse,johansson2021training} that may not be deployed on some neuromorphic hardware~\cite{davies2018loihi,akopyan2015truenorth,liu2019live} where only spike operation is allowed. To achieve deep direct training of SNNs, Hu \etal.~\cite{hu2021advancing} and Fang \etal .~\cite{fang2021deep} explored on classification tasks and proposed MS-ResNet and SEW-ResNet respectively to overcome the vanishing/exploding gradient problems and advanced the directly-trained SNNs to depths greater than 100 layers. Unfortunately, multi-scale transformation of channels and dimensions when extracting features of objects of different sizes will cause the problem of increased energy consumption due to non-spike convolutional operations in their networks to be prominent in the object detection task. Therefore, the problem of high energy consumption caused by the non-spike convolutional operations urgently needs to be addressed.

% Unfortunately, the inherent drawbacks of their models make the networks bring non-spike operations which means the models introduce MAC operations resulting in more energy consumption. SEW-ResNet sums the spikes at residual connection, which causes the next block to generate non-spike convolutions. MS-ResNet mainly focuses on the spiking of residual paths and ignores non-spike convolution operation on shortcut paths with dimensional changes or channel number variations.  on shortcut paths

To tackle these problems, we propose a novel directly trained SNN for object detection based on the YOLO framework (EMS-YOLO). Our model is the first to use surrogate gradients to train a deep and large-scale SNN for object detection without converting from ANNs. Specially, to deal with the multi-scale object features in object detection, we design a new full-spike energy-efficient residual block, EMS-ResNet, that avoids redundant MAC operations caused by non-spike convolution. Our model can achieve high performance with few time steps and handle both static images and event data efficiently. Compared with other converted or hybrid SNNs for object detection, our model has higher performance and lower energy consumption. 

Our major contributions of this paper can be summarized as follows:
\begin{itemize}
    \item  We propose EMS-YOLO, a novel directly trained spiking neural network for object detection, which could achieve better performance than the advanced ANN-SNN conversion methods while requiring only 4 time steps and inferencing in real time. 
    \item We design an \textbf{E}nergy-efficient \textbf{M}embrane-\textbf{S}hortcut ResNet, EMS-ResNet, that enables full spiking in the network thus reducing power consumption. Moreover, we theoretically analyze that it can be trained deeply since it avoids gradient disappearance or explosion.
    \item The experiments on COCO and the Gen1 Datasets demonstrate that our models could achieve comparable performance to the ANN with the same architecture meanwhile reducing $5.83\times$ energy consumption. 
\end{itemize}

%-------------------------------------------------------------------------

%------------------------------------------------------------------------
\section{Related Work}
\subsection{Deep Spiking Neural Networks}
The training strategies for deep SNNs are mainly divided into ANN-SNN conversion and directly training SNNs. The essence of ANN-SNN conversion is to approximate the average firing rate of SNNs to the continuous activation value of ANNs that use ReLU as the nonlinearity~\cite{cao2015spiking,diehl2015fast}. The performance of the converted SNN relies on the original ANN, and it is tough to get high-performance SNNs with low time delay and suffers performance losses during conversion. Moreover, the converted SNNs cannot work well on the sparse event data which can be effectively combined with neuromorphic hardware.

Inspired from the explosive development of ANNs, the researchers use the surrogate gradient to achieve directly training of SNNs. The mature back-propagation mechanism and diverse coding schemes~\cite{neftci2019surrogate,kaiser2020synaptic} have enabled directly-trained SNNs to work well on short time steps while requiring low power consumption. Zheng \etal.~\cite{zheng2021going} proposed the threshold-dependent batch normalization (TDBN) technique which could extend SNNs from a shallow structure (\textless 10 layers) to a deeper structure (50 layers) based on the framework of STBP~\cite{wu2018spatio}. Hu \etal.~\cite{hu2021advancing} and Fang \etal.~\cite{fang2021deep} advanced the achievement of high performance on classification tasks. Currently, there is some work that uses deep directly training SNN for regression tasks like object tracking~\cite{zhang2022spiking,xiang2022spiking}, image reconstruction~\cite{zhu2022event}, while the object detection task has not yet been explored.

\subsection{Energy-Efficient Object Detection}
 Significant vision sensors in object detection include frame-based and event-based cameras~\cite{li2022retinomorphic,posch2010qvga}, where the latter can handle challenging scenes such as motion blur, overexposure, and low light, and have become a hot spot in recent years. The mainstream deep learning-based detectors mainly fall into two categories: two-stage frameworks (RCNN series)~\cite{girshick2014rich}, and one-stage frameworks (YOLO series~\cite{redmon2016you}, SSD~\cite{liu2016ssd}, Transformer series~\cite{zhu2020deformable}). They are implemented based on ANNs, which achieve high performance but also bring high energy consumption. Therefore, some explorations of SNN-based object detection have attempted to provide more energy-efficient solutions.
 
 The earliest attempts~\cite{kim2020spiking,kim2020towards,li2022spike} were based on ANN-SNN conversion method which requires long inference time and cannot be applied to event camera data due to the inherent limitations of the approach. Some hybrid architectures~\cite{lien2022sparse,johansson2021training} tried to use directly trained SNN backbones and ANN detection heads for object detection, while these detection heads introduce an additional amount of parameters. In this work, we present the first attempt of object detection with fully and deep directly trained SNNs. 

\subsection{Spiking Residual Networks}
The ANN-SNN conversion does not concern the design of residual blocks oriented to the trainability of deep SNNs, while the residual block is mainly utilized to achieve loss-less accuracy~\cite{hu2021spiking,sengupta2019going}. For directly training of SNNs, the residual structure empowers it to train deeply. Zheng \etal.~\cite{zheng2021going} firstly obtain directly trained ResNet-34/50 with surrogate gradient. SEW-ResNet~\cite{fang2021deep} and MS-ResNet~\cite{hu2021advancing} extended the depth of the SNN model to over 100 layers. However, the former is essentially an integer multiplication operation of the residual network. The latter focuses mainly on the spiking of residual paths and ignores the non-spiking structure on shortcut paths. When applied to object detection tasks with varying dimensionality and number of channels, these non-spike convolutions can result in heavy energy consumption. Therefore, we design a full spiking residual network to exploit the energy efficiency of SNN.
% but its use of additive operations in residual connections introduces MAC operations and thus increases energy consumption. 

\section{The Preliminaries of SNNs} 
\subsection{Spiking Neuron}
Neurons are the basic units of a neural network, which convert a barrage of synaptic inputs into meaningful action potential outputs. In ANNs, artificial neurons discard temporal dynamics and propagate information only in the spatial domain. In contrast, in SNNs, spiking neurons are more biologically plausible as they mimic the membrane potential dynamics and the spiking communication scheme. Typically, the Leaky Integrate-and-Fire (LIF) model~\cite{abbott1999lapicque}, the Hodgkin-Huxley (H-H) model~\cite{hodgkin1952quantitative} and the Izhikevich model~\cite{izhikevich2003simple} are the most famous ones. Among them, LIF models are widely adopted to construct SNNs due to its good trade-off between bio-plausibility and complexity. Also, they contain abundant biological features and consume less energy than the ANN neurons. In our work, we use the iterative LIF model proposed by Wu \etal.~\cite{wu2019direct} in the SNN model, which can be described as:
\begin{equation}
    V^{t+1,n+1}_i=\tau{V^{t,n+1}_i}(1-X^{t,n+1}_i)+\sum_j{W_{ij}^nX^{t+1,n}_j}
\end{equation}
\begin{equation}
    X^{t+1,n+1}_i=H(V^{t+1,n+1}_i-V_{th})
\end{equation}
where the $V^{t,n+1}_i$ is the membrane potential of the $i$-th neuron in the $n+1$ layer at the timestep $t$, $\tau$ is a decay factor for leakage. The input to a synapse is the sum of $j$ spikes $X^{t+1,n}_j$ with synaptic weights $W_{ij}^n$ from the previous layer $n$. $H(\cdot)$ denotes the Heaviside step function which satisfies $H(x)=1$ for $x\geq 0$, otherwise $H(x)=0$. As shown in the Figure~\ref{fig:ODSNN}, a firing activity is controlled by the threshold $V_{th}$, and the $V^{t+1,n+1}$ will be reset to $V_{rest}$ once the neuron emits a spike at time step $t+1$.

\subsection{Training Strategies}
In order to solve the problem that the spike cannot be differentiated in back-propagation, we use the surrogate gradient ~\cite{wu2018spatio} which can be represented as:
\begin{equation}
    \frac{\partial X_{i}^{t,n}}{\partial V_{i}^{t,n}}=\frac{1}{a} sign(|{V_{i}^{t,n}-V_{th}|\leq \frac{a}{2}})
\end{equation}
where $a$ is introduced to ensure the integral of the gradient is 1 and determines the curve steepness. 
% With this method, the SNNs can be directly trained by data-driven back-propagation easily.

We use the TDBN~\cite {zheng2021going} normalization method which considers spatial and temporal domains. The TDBN can be described as:
\begin{equation}
        V^{t+1,n+1}_i=\tau{V^{t,n+1}_i}(1-X^{t,n+1}_i)+{{\rm TDBN}(I_i^{t+1})}
\end{equation}
\begin{equation}
    {\rm TDBN}(I_i^{t+1})=\lambda_{i} \frac{\alpha V_{th}(I_i^{t+1}-\mu_{ci})}{\sqrt{\sigma_{ci}^{2}+\epsilon}}+\beta_{i}
\end{equation}
where $\mu_{ci},\sigma_{ci}^{2}$ are the mean and variation values for every channel using a mini-batch of sequential inputs $\{ I_{i}^{t+1}=\Sigma_{j}W_{ij}^{n}X_{j}^{t+1,n}|t=0,...,T-1\}$, $\epsilon$ is a tiny constant to avoid dividing by zero, $\lambda_i,\beta_i$ are two trainable parameters, and $\alpha$ is a threshold-dependent hyper-parameter.

\subsection{Energy Consumption}
The number of operations is often used to measure the computational energy consumption of neuromorphic hardware. In ANNs, each operation involves multiplication and addition (MAC) of floating point numbers, and Times of floating-point operations (FLOPs) are used to estimate computational burden. SNNs have the energy-efficient property in neuromorphic hardware since the neurons only participate in the accumulation calculation (AC) when they spike and could achieve the calculation with about the same number of synaptic operations (SyOPs). However, many current SNNs introduce additional MAC operations due to their design flaws. Thus, we quantify the energy consumption of vanilla SNNs as $E_{SNN}=\sum_n{E_{b}}$, for each block $n$:
\begin{equation}
    E_{b}=T\times(fr\times{E_{AC}}\times{OP_{AC}}+E_{MAC}\times{OP_{MAC}})\label{energycost}
\end{equation}
where $T$ and $fr$ represents the total time steps and the block firing rate. The blocks are normally convolutional or fully connected, and the energy consumption is determined by the number of AC  and MAC  operations ($OP_{AC}, OP_{MAC}$). In this work, we adopt the same structure of SNN and ANN to compare the energy consumption and assume that the data for various operations are 32-bit floating-point implementation in 45nm technology~\cite{horowitz20141}, where $E_{MAC}=4.6pJ$ and $E_{AC}=0.9pJ$.
\section{Method}

%\subsection{Problem Formulation}

\subsection{Input Representation}
\paragraph{Static Images Inputs}

Typically, considering the spatio-temporal feature of SNNs, the static images generated by the frame cameras are copied and used as the input frame for each time step~\cite{hu2021advancing,fang2021deep,10032591}.
\paragraph{Event-based Inputs}
 Event cameras work completely differently from frame cameras where each pixel responds independently to changes in light. An event $e_n=(x_n,y_n,t_n,p_n)$ is generated for a pixel $(x_n,y_n)$ at the timestamp $t_n$ when the logarithmic light change $I(x,y,t)$ exceeds the threshold $\theta_{th}$. The polarity $p_n\in\{-1,1\}$ denotes the increase or decrease of light intensity.
%  , which can be formulated as:
% \begin{equation}
%     \ln{I(x_n,y_n,t_n)}-\ln{I(x_n,y_n,t_n-\Delta{t_n})}=p_n\theta
% \end{equation}
% where  $\Delta{t_n}$ represents the temporal sampling interval.

Given the spatio-temporal window $\zeta$, the asynchronous event stream $E=\{e_n\in\zeta:n=1,...,N\}$ represents a sparse grid of points in 3D space. In this work, we split $E$ into temporal bins with a constant temporal window $dt$, which maps the events into image-like 2D representations~\cite{yao2021temporal}. The network processes $T$ fixed time steps each time, and the total sequence $\Gamma=T\times{dt}$.

\subsection{The Energy-Efficient Resisual Block}

\begin{figure}[!t]
%是可选项 h表示的是here在这里插入，t表示的是在页面的顶部插入
\centering
\includegraphics[scale=0.46]{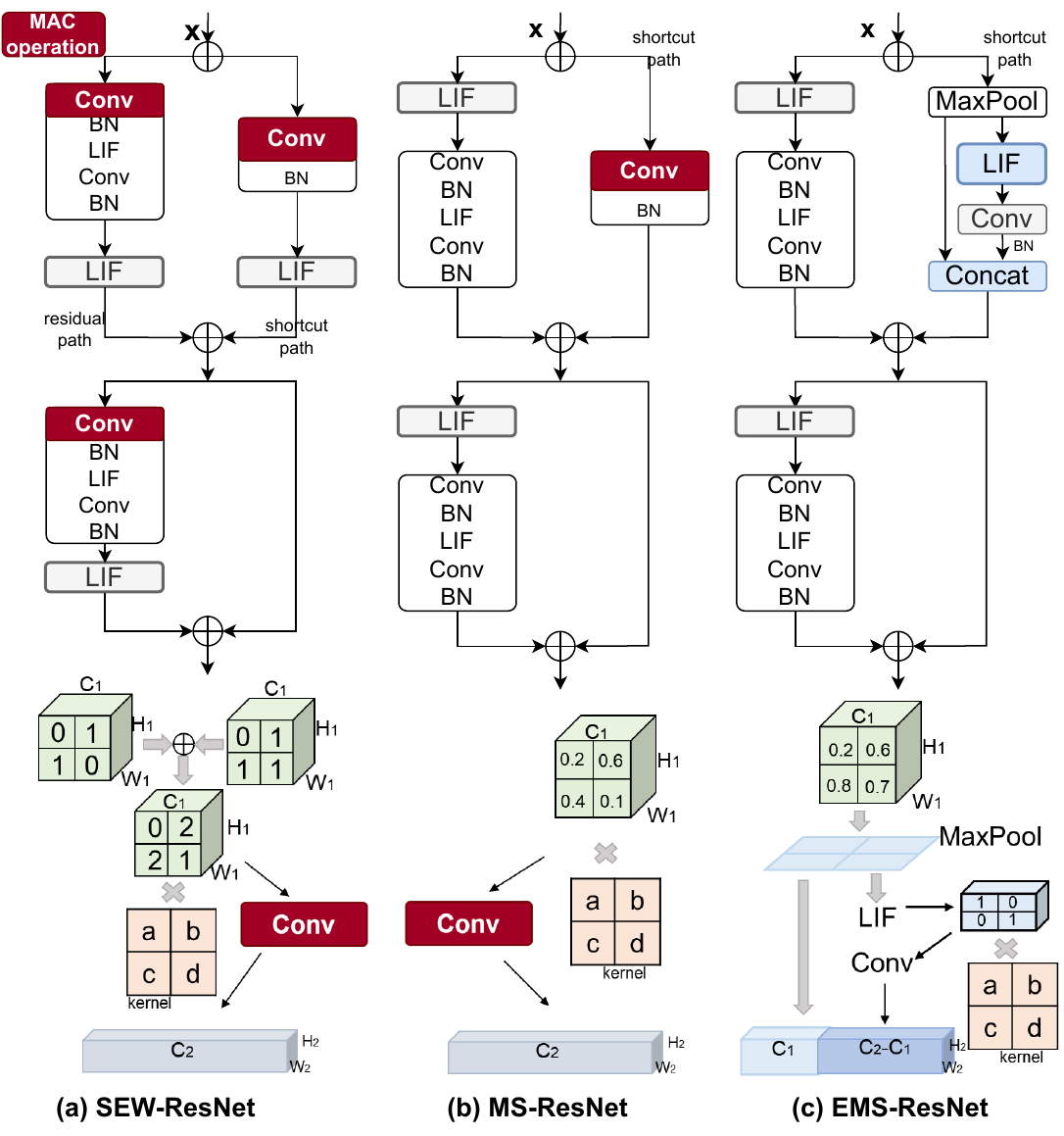}
\caption{{\bf{The Sew-ResNet, the MS-ResNet and proposed EMS-ResNet}.} (a) The sum of spikes in SEW-ResNet causes non-spike convolution operations. (b) MS-ResNet introduces non-spike convolution on shortcut paths where the dimensionality and number of channels changes. (c) The full-spike EMS-ResNet.}
\label{fig:res_compare}
\end{figure}

Currently, the main structures used for deep directly training in SNNs are SEW-ResNet and MS-ResNet. Here, we define the output of the L-th residual block, denoted by ${\bf{X}}^{L}$, as:
\begin{equation}
    {\bf{X}}^L=Add({\bf{F}}^r({\bf{X}}^{L-1}),{\bf{F}}^s({\bf{X}}^{L-1}))
\end{equation}
where the residual path is ${\bf{F}}^r(\cdot)$, and the shortcut path is represented as ${\bf{F}}^s(\cdot)$. As shown in Figure~\ref{fig:res_compare}a, SEW-ResNet implements residual learning by restricting the final LIF activation function to each of the residual and shortcut paths. When both paths transmit spikes, $Add({\bf{F}}^r({\bf{X}}^{L-1}),{\bf{F}}^s({\bf{X}}^{L-1}))=2$, which means that the non-spike convolution operation in the next block will introduce MAC operations. Although using either AND or IAND operation to avoid this problem has been tried, it comes with an unbearable  performance loss. For the MS-ResNet in the Figure~\ref{fig:res_compare}b, it is apparent that it ignores the non-spike convolution operations on shortcut paths. When the dimensionality of the network or the number of channels is constantly changing, the energy consumption caused by this component is not negligible. According to Equation~\ref{energycost}, to achieve energy efficiency, we design a full-spike residual block (EMS-Block) from the perspective of reducing parameters, and avoiding MAC operations.

 As illustrated in Figure~\ref{fig:res_compare}c, our proposed EMS-ResNet is designed differently for channel number and dimension variation. The idea of MS-ResNet is adopted on the residual path. On the shortcut path, when the number of channels changes, an additional LIF is added before the convolution for converting the information into sparse spikes which enables the entire network can be fully spiked.  We transform the EMS-Block into two formats (see Figure~\ref{fig:ODSNN}). The EMS-Block1 is used when the channel number is constant or decreasing. For increasing channel numbers, we design the EMS-Block2 that uses concatenation operations to feature reuse while reducing parameters. Considering features are usually extracted by constant downsampling in the object detection task, we use the maxpool to reduce parameters. Our design improves flow of information and the output flow on the shortcut path is conceptually approximate to the sum of synaptic inputs to neuronal membranes. The full spike feature implies energy efficient properties for the network, so we name our network as the Energy-Efficient Menbrane-Shortcut ResNet (EMS-ResNet). 
 
 % We transform the EMS-Block into two formats. The EMS-Block1 has a shortcut path of ${\bf{F}}^s(\cdot)=LCB(MaxPool(\cdot))$ and  The  is used for channel number increase, where ${\bf{F}}^s(\cdot)=Concat(MaxPool(\cdot),LCB(MaxPool(\cdot)))$ and the concatenation operation can also reduce the number of parameters., which can be represented as ${\bf{F}}^r(\cdot)=\bf{LCB}(\bf{LCB}(\cdot))$, where $\bf{LCB}(\cdot)=\bf{BN}(\bf{Conv}(\bf{LIF}(\cdot)))$

\subsection{The EMS-YOLO Model}
\begin{figure*}[!t]
%是可选项 h表示的是here在这里插入，t表示的是在页面的顶部插入
\centering
\includegraphics[scale=0.45]{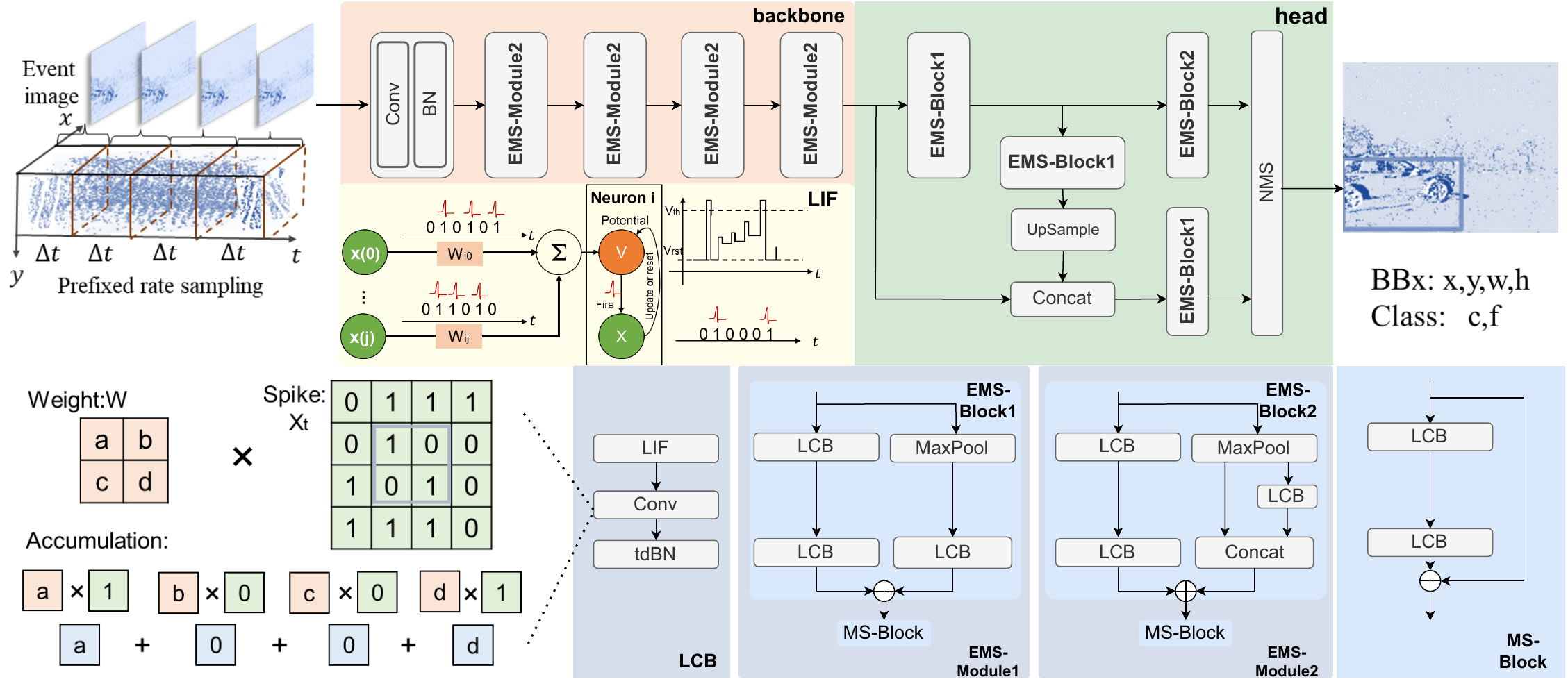}
\caption{{\bf{The proposed directly-trained SNN for object detection (EMS-YOLO)}}. EMS-YOLO mainly consists of backbone and head, which are mainly composed of EMS-Blocks. EMS-Module1 and EMS-Module2 are EMS-Block1 and MS-Block, EMS-Block2 and MS-Block connections respectively. EMS-Block2 is used when the number of output channels increases, otherwise, EMS-Block1 is used.}
\label{fig:ODSNN}
\end{figure*}

Our goal is to predict the classification and position of the objects from the given static image or event stream that can be represented as ${\bf{X}}=\{X_t\}^T_{t=1}$. The dimensions of the input data $X_t$ for each time step $t$ are $C\times H\times W$, where $C$ is the channel number, $H\times W$ is the resolution. To achieve this, we calculate the information of N objects ${\bf{B}}=\{B_n\}^N_{n=1}$ by:

\begin{equation}
    \bf{B}=\mathcal{D}(\bf{X})
\end{equation}
where each $B_n=\{x_n,y_n,w_n,h_n,c_n,f_n\}$ is the corresponding object's bounding box position and class prediction. $(x_n,y_n),w_n,h_n$ represent the upper-left spatial coordinates, width, and length of the bounding box, respectively. $c_n,f_n$ are the class and confidence level of the object, respectively. $\mathcal{D}$ refers to our proposed EMS-YOLO, which could achieve performance comparable to that of a same-structure ANN, while inferring in real time.
%这里总结哪一个更重要？keeps the lower energy consumption but also makes our model easy to deploy in the neuromorphic hardware.

As shown in Figure~\ref{fig:ODSNN}, our EMS-YOLO is a variation of the YOLO framework, consisting of two modules: the backbone for feature extraction and the detection heads. The analog pixel values $\bf{X}$ of an image are directly applied to the input layer of EMS-YOLO. For the backbone, the first convolutional layer is trained to convert inputs into spikes where LIF neurons integrate the weighted inputs and generate output spike trains when the membrane potential exceeds the trained firing threshold. Then we employ the EMS-Modules to extract object features from different dimensions and number of channels, which can enhance the robustness of the network. The number of EMS-Blocks and the channel width can be dynamically adjusted to the particular task, that Ablation experiments are shown in Sec~\ref{ablations}, and EMS-ResNet18 is shown in the Figure~\ref{fig:ODSNN} as an example.

For the detection head, to avoid the loss of SNN performance due to the multi-layer direct-connected convolution structure~\cite{zheng2021going} of the conventional detection head, in this work we take the yolov3-tiny detection head~\cite{redmon2018yolov3} as an example and replace multiple directly connected convolutions with EMS-Blocks.

The main challenge of object detection, as a regression task with SNN models, is to convert the features extracted with spike trains into accurate continuous value representations of the bounding boxes coordinates. Here, we feed last membrane potential of the neurons ~\cite{zhu2022event} into each detector to generate anchors of different sizes. After NMS processing, the final category and bounding box coordinates of different objects can be obtained. Our end-to-end EMS-YOLO is trained with the cross-entropy loss function\footnote{https://github.com/ultralytics/yolov3}.

\subsection{Analysis of Gradient Vanishing/Explosion Problems}\label{sec.gradient}
To explore the potential of our EMS-ResNet to enable deep training, we analyze the feasibility from a theoretical perspective. According to Gradient Norm Equality (GNE) theory~\cite{chen2020tpami}, our EMS-ResNet could avoid exploding and vanishing gradients.

\begin{lemma}
Consider a neural network that can be represented as a series of blocks as  and the $j$th block’s
jacobian matrix is denoted as $J_j$. If $\forall j; \phi(J_jJ_j^\mathsf{T}) \approx 1$ and
$\varphi(J_jJ_j^\mathsf{T}) \approx 0$ , the network achieves “Block Dynamical
Isometry” and can avoid gradient vanishing or explosion.
\end{lemma}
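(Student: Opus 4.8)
The plan is to situate the entire argument within the Gradient Norm Equality (GNE) framework of~\cite{chen2020tpami}, reading $\phi(\cdot)$ as the first moment of the spectral density (the normalized expected trace) and $\varphi(\cdot)$ as the variance of the eigenvalue distribution. First I would pin down these definitions precisely: for a matrix $A$ of dimension $N$, set $\phi(A)=\frac{1}{N}\mathbb{E}[\mathrm{tr}(A)]$ and $\varphi(A)=\frac{1}{N}\mathbb{E}[\mathrm{tr}(A^2)]-\phi(A)^2$, so that $\varphi(A)\approx 0$ forces the eigenvalues of $A$ to concentrate tightly around their mean $\phi(A)$. With the network written as a composition of blocks, the backpropagated gradient that reaches the input is governed by the product $\prod_j J_j$, and I would express the expected squared gradient norm in terms of $\phi\!\left(\prod_j J_j J_j^\mathsf{T}\right)$, since this quantity measures the average amplification applied to the backpropagated signal. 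The whole task then reduces to showing this product moment stays near $1$ regardless of depth.

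The central step is the approximate multiplicativity of $\phi$ across blocks. Invoking the free-independence hypotheses of free probability theory---under which successive block Jacobians behave as freely independent random matrices---one obtains $\phi\!\left(\prod_j J_j J_j^\mathsf{T}\right)\approx\prod_j\phi(J_j J_j^\mathsf{T})$. Substituting the hypothesis $\phi(J_j J_j^\mathsf{T})\approx 1$ for every $j$ makes the product telescope to $\approx 1$, so no systematic gradient amplification or attenuation accumulates with depth. The second hypothesis $\varphi(J_j J_j^\mathsf{T})\approx 0$ then promotes this mean-level statement to a spectrum-level one: each block's singular values cluster around $1$, which keeps the variance of the product Jacobian controlled and forces it to act as a near-isometry in every direction rather than merely on average. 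Together these two facts are exactly the definition of \emph{Block Dynamical Isometry}, from which the absence of gradient vanishing or explosion follows.

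I expect the main obstacle to be the rigorous justification of the multiplicativity of $\phi$---together with the companion claim that $\varphi$ does not blow up through the product. This is precisely where free probability is indispensable, and it hinges on the assumption that the block Jacobians are mutually freely independent, which is natural in the abstract GNE setting but delicate to verify for any concrete layer composition. Once this multiplicative structure is granted, the remaining work is bookkeeping: checking that the isometry conditions propagate through the composition and concluding that the gradient norm is preserved across the entire depth, so the network can in principle be trained arbitrarily deep.
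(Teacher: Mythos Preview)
Your sketch is reasonable, but there is nothing in the paper to compare it against: the paper does not prove this lemma. It is quoted verbatim as a result from~\cite{chen2020tpami} and accompanied only by a one-line informal gloss (``the theory ensures the gradient of the network will not decrease to $0$ or explode to $\infty$ since every block has $\phi(J_jJ_j^\mathsf{T})\approx 1$, and $\varphi(J_jJ_j^\mathsf{T})\approx 0$ makes sure that accident situations won't happen''), after which the reader is referred to the original source. The paper's own technical work begins later, with Propositions~1 and~2, which \emph{apply} this lemma to the EMS-Blocks rather than establish it.

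That said, your outline is faithful to the GNE framework of~\cite{chen2020tpami}: the interpretation of $\phi$ and $\varphi$, the reduction of the gradient norm to $\phi\!\left(\prod_j J_jJ_j^\mathsf{T}\right)$, and the use of free-probability multiplicativity (what the paper later quotes as Lemma~\ref{lemma:multiplication}) are exactly the ingredients of the original proof. The one place you might tighten is the role of the free-independence hypothesis: in~\cite{chen2020tpami} the multiplicativity statement requires the product $(\prod_j J_j)(\prod_j J_j)^\mathsf{T}$ to be at least first-moment unitarily invariant, which is a slightly different (and in practice weaker) condition than blockwise free independence; you may want to phrase the assumption that way if you intend to match the cited theorem precisely.
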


Here, ${J}_{j}$ means the Jacobian matrix of the block $j$, $j$ is the index of the corresponding block. $\phi$ means the expectation of the normalized trace. $\varphi$ means $\phi(\boldsymbol{A}^{2})-\phi^{2}(\boldsymbol{A})$. 

Briefly, the theory ensures the gradient of the network will not decrease to $0$ or explode to $\infty$ since every block have $\phi(J_jJ_j^\mathsf{T}) \approx 1$. And $\varphi(J_jJ_j^\mathsf{T}) \approx 0$ makes sure that accident situation won't happen. And in most cases\cite{zheng2021going}\cite{hu2021advancing}, $\phi(J_jJ_j^\mathsf{T}) \approx 1$ is enough for avoiding gradient vanish or exploding. Detailed description of the notation and the theory are in \cite{chen2020tpami}.

\begin{table*}[t]
\begin{center}
    
    \begin{tabular}{c c c c c }
    \hline
    Method &Work & Model &Time Step(T) &mAP@0.5\\
    \hline
    \multirow{3}{*}{Backpropagation}&\multirow{3}{*}{ANN}&Tiny-Yolo&{/}&{0.262} \\
    \multirow{3}{*}{}&\multirow{3}{*}{}&ResNet50&{/}&{0.460} \\
    \multirow{3}{*}{}&\multirow{3}{*}{}&ResNet34&{/}&{0.565} \\
    \hline
    \multirow{7}{*}{\thead{ANN-SNN\\ \\ Conversion}}&Spiking-Yolo~\cite{kim2020spiking}&\multirow{3}{*}{Tiny-Yolo}&3500&0.257 \\
  
    \multirow{7}{*}{}&\multirow{2}{*}{Bayesian Optimization~\cite{kim2020towards}}&\multirow{3}{*}{}&500&0.211 \\
    
    \multirow{7}{*}{}&\multirow{3}{*}{}&\multirow{2}{*}{}&5000&0.258 \\
    \cline{2-5}
    \multirow{7}{*}{}&\multirow{4}{*}{Spike Calibration~\cite{li2022spike}}&\multirow{4}{*}{\thead{ResNet50+Burst+ \\ MLIPooling+SpiCalib} }&64&0.331 \\
    \multirow{7}{*}{}&\multirow{4}{*}{}&\multirow{4}{*}{}&128&0.436 \\
    \multirow{7}{*}{}&\multirow{4}{*}{}&\multirow{4}{*}{}&256&0.453 \\
    \multirow{7}{*}{}&\multirow{4}{*}{}&\multirow{4}{*}{}&512&0.454 \\
    \hline
    {\bf{\thead{Directly-Trained \\SNN}}}&{EMS-YOLO}&{EMS-ResNet34}&{\bf{4}}&{\bf{0.501}}\\
    \hline
    \end{tabular}
    \vspace{1mm}
    \caption{ Results on the COCO2017 DataSet.}
    \vspace{-5mm}
    \label{tab:coco1}
    \end{center}
\end{table*}

\begin{definition}
\textbf{(General Linear Transform) }Let $\boldsymbol{f}(\boldsymbol{x})$ be a transform whose Jacobian matrix is $\boldsymbol{J}$. f is called general linear transform when it satisfies:

\begin{equation}
    E\left[\frac{\|\boldsymbol{f}(\boldsymbol{x})\|_{2}^{2}}{\operatorname{len}(\boldsymbol{f}(\boldsymbol{x}))}\right]=\phi\left(\boldsymbol{J J}^{T}\right)E\left[\frac{\|\boldsymbol{x}\|_{2}^{2}}{\operatorname{len}(\boldsymbol{x})}\right] .
\end{equation}
\end{definition}

According to the random matrix and mean field theory \cite{poole2016exponential}, the data flow propagating in the network can be regarded as random variables. Thus, $\boldsymbol{x}$ is considered as a random variable. We denote the notation $E\left[\frac{\|\boldsymbol{x}\|_{2}^{2}}{\operatorname{len}(\boldsymbol{x})}\right]$ as the $2^{th}$ moment of input element. 

The definition is useful for analysis the gradient. Because once the output of an EMS-Block is normalized by the BN layer, the $2^{th}$ moment $E\left[\frac{\|\boldsymbol{f}(\boldsymbol{x})\|_{2}^{2}}{\operatorname{len}(\boldsymbol{f}(\boldsymbol{x}))}\right]$ which we  denote as $\alpha_{2}$ is clear.

\begin{lemma}[Multiplication](Theorem 4.1 in~\cite{chen2020tpami})
Given $\boldsymbol{J}:=\prod_{j=L}^1 \boldsymbol{J}_j$, where $\{\boldsymbol{J}_j\in \mathbb{R}^{m_j\times m_{j-1}}\}$ is a series of independent random matrices. If $(\prod_{j=L}^1\boldsymbol{J}_j)(\prod_{j=L}^1\boldsymbol{J}_j)^T$ is at least the $1^{st}$ moment unitarily invariant, we have 
\begin{equation}
    \phi\left((\prod_{j=L}^1\boldsymbol{J}_j)(\prod_{j=L}^1 \boldsymbol{J}_j)^T \right)=\prod_{j=L}^1\phi(\boldsymbol{J}_j {\boldsymbol{J}_j}^T).
\end{equation}
\label{lemma:multiplication}
\end{lemma}

\begin{lemma}[Addition] (Theorem 4.2 in \cite{chen2020tpami}) Given $\boldsymbol{J}:=\prod_{j=L}^1 \boldsymbol{J}_j$, where $\{\boldsymbol{J}_j\in \mathbb{R}^{m_j\times m_{j-1}}\}$ is a series of independent random matrices. If at most one matrix in $\boldsymbol{J}_j$ is not a central matrix, we have
\begin{equation}
    \phi(\boldsymbol{JJ}^T)=\sum_j\phi(\boldsymbol{J}_j {\boldsymbol{J}_j}^T).
\end{equation}
\label{lemma:addition}
\end{lemma}

The multiplication and addition principle provide us a technique for analysising the network with serial connections and parallel connections.

\textbf{Discussion for General Linear Transform.} Since the Jacobian matrix of pooling can be regarded as a matrix $\boldsymbol{J}$ that the element of matrix $\left[\boldsymbol{J}\right]_{ik}\in\{0, 1\}$, $0$ for the element not chosen while $1$ for element chosen. Thus the pooling layer can be regarded as a general linear transform. Similarly, the UpSampling layer in the detection head is also a generalized linear transformation. Concatenation is general linear transform too, because the function of $\boldsymbol{f(x)}=[\boldsymbol{x}, \Tilde{\boldsymbol{f(x)}}]$ can be expressed as $\boldsymbol{f(x)}=[\boldsymbol{I} \quad \Tilde{\boldsymbol{J}}]\boldsymbol{x}$ if the function $\Tilde{\boldsymbol{f}}$ is general linear transform. Since the BN and CONV layers are already discussed in~\cite{chen2020tpami}, so we only need to assume that the LIF layer satisfy general linear transformation, which is already used in proof by \cite{zheng2021going}. Since the EMS-ResNet is a serial of basic EMS-Blocks and MS-Blocks, we can separately analysis those blocks and multiply them together. 

\begin{proposition}
For EMS-Block1 and EMS-Block2, the Jacobian matrix of the block can be represented as $\phi(\boldsymbol{J _{j}J_{j}^{T}})=\frac{2}{\alpha_{2}^{j-1}}$.
\end{proposition}

\begin{proof}
It is detailed in \textbf{Supplementary Material Proof A.1 and Proof A.2}.
\end{proof} 

\begin{proposition}
\label{goodproperty}
For the EMS-ResNet, $\phi(\boldsymbol{JJ^{T}})\approx 1$ can be satisfied by control the $2^{th}$ moment of the input.
\end{proposition}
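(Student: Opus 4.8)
The plan is to lift the per-block estimate of the preceding proposition to the whole network via the Multiplication Lemma, and then to show that the resulting product telescopes to unity once the second moment of the network input is fixed. Concretely, since the EMS-ResNet is a serial cascade of EMS-Blocks and MS-Blocks, its end-to-end Jacobian factorizes as $\boldsymbol{J}=\prod_{j=L}^{1}\boldsymbol{J}_{j}$. Treating each block as an (approximately) independent, at-least-first-moment unitarily-invariant random matrix, Lemma~\ref{lemma:multiplication} gives
\begin{equation}
\phi(\boldsymbol{JJ}^{T})=\prod_{j=L}^{1}\phi(\boldsymbol{J}_{j}\boldsymbol{J}_{j}^{T}).
\end{equation}
So it suffices to understand each factor $\phi(\boldsymbol{J}_{j}\boldsymbol{J}_{j}^{T})$ together with how these factors accumulate along the depth.

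Next I would combine the preceding proposition with the General Linear Transform definition to track how the second moment propagates. Writing $\alpha_{2}^{j}$ for the second moment of the output of block $j$ (equivalently, the input of block $j+1$), the preceding proposition reads $\phi(\boldsymbol{J}_{j}\boldsymbol{J}_{j}^{T})=2/\alpha_{2}^{j-1}$, while the General Linear Transform definition says that block $j$ scales the input second moment by exactly this factor:
\begin{equation}
\alpha_{2}^{j}=\phi(\boldsymbol{J}_{j}\boldsymbol{J}_{j}^{T})\,\alpha_{2}^{j-1}=\frac{2}{\alpha_{2}^{j-1}}\,\alpha_{2}^{j-1}=2.
\end{equation}
Hence every block emits an output of second moment $2$ — the sum of its two BN-normalized unit-moment paths — independently of its input. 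Consequently the input second moment of every block past the first is also $2$, which forces $\phi(\boldsymbol{J}_{j}\boldsymbol{J}_{j}^{T})=2/2=1$ for all $j\ge 2$, and the only surviving factor is the first block, where $\phi(\boldsymbol{J}_{1}\boldsymbol{J}_{1}^{T})=2/\alpha_{2}^{0}$ with $\alpha_{2}^{0}$ the second moment of the raw network input.

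Putting the two steps together, the product collapses to $\phi(\boldsymbol{JJ}^{T})=(2/\alpha_{2}^{0})\cdot 1^{\,L-1}=2/\alpha_{2}^{0}$, a constant that is independent of the depth $L$; this alone already rules out exponential vanishing or explosion of the gradient. Controlling the second moment of the input so that $\alpha_{2}^{0}\approx 2$ (via the leading input-normalization and BN layer) then yields $\phi(\boldsymbol{JJ}^{T})\approx 1$, as claimed. I expect the main obstacle to lie not in this algebra but in justifying the hypotheses behind it: verifying that the block Jacobians are close enough to independent and unitarily invariant for Lemma~\ref{lemma:multiplication} to apply, and that the LIF, concatenation, maxpool, and shortcut operations inside each block genuinely act as general linear transforms so that the self-stabilizing recursion $\alpha_{2}^{j}=2$ holds uniformly across both the EMS-Blocks and the interleaved MS-Blocks. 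Making those approximations precise — rather than the telescoping itself — is where the real work lies, and it is exactly what the per-block computations of the preceding proposition (and the analogous MS-Block estimate) must supply.
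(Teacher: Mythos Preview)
Your overall strategy—factor the end-to-end Jacobian via Lemma~\ref{lemma:multiplication}, track the second moment $\alpha_2^{j}$ through the cascade using the General Linear Transform identity, and then telescope—is exactly the paper's approach. The gap is that you apply the preceding proposition's formula $\phi(\boldsymbol{J}_j\boldsymbol{J}_j^{T})=2/\alpha_2^{j-1}$ to \emph{every} block, whereas the EMS-ResNet is not a pure chain of EMS-Blocks: each EMS-Module is an EMS-Block followed by an MS-Block (see Figure~\ref{fig:ODSNN}). The MS-Block has an \emph{identity} shortcut, so its output second moment is $\alpha_2^{l}=\alpha_2^{l-1}+1$, giving $\phi(\boldsymbol{J}_{\mathrm{MS}}\boldsymbol{J}_{\mathrm{MS}}^{T})=(\alpha_2^{l-1}+1)/\alpha_2^{l-1}$, not $2/\alpha_2^{l-1}$. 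Your ``self-stabilizing recursion $\alpha_2^{j}=2$'' therefore fails precisely at the MS-Blocks.

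Once you insert the correct MS-Block estimate, the second moments alternate $2\to 3\to 2\to 3\to\cdots$ along the EMS/MS pairs: an EMS-Block fed with moment $3$ contributes $\phi=2/3$ and outputs moment $2$; the following MS-Block contributes $\phi=3/2$ and outputs moment $3$. Each full module past the first multiplies to $1$, and the overall product collapses to $\phi(\boldsymbol{JJ}^{T})=3/\alpha_2^{0}$ rather than your $2/\alpha_2^{0}$. The paper then sets $\alpha_2^{0}\approx 3$ via the encoding-layer BN. So your telescoping idea is right, but the constant—and the block-by-block accounting that produces it—needs the separate MS-Block analysis you deferred to the last paragraph.
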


% \textbf{Brief analysis of the backbone network.} Since the EMS-Block1 and EMS-Block2 have the property that $\phi(\boldsymbol{J _{i}J_{i}^{T}})=\frac{2}{\alpha_{2}^{i-1}}$, the exponentially exploding trend of the MS-Block will be stopped~\cite{chen2020tpami}. According to lemma~\ref{lemma:multiplication}, we have
% \begin{equation}
% \label{eq:prod}
%     \phi(\boldsymbol{JJ^{T}}) = \prod_{j=L}^1\phi(\boldsymbol{J}_j {\boldsymbol{J}_j}^T) = \frac{3}{\alpha_{2}^{0}}.
% \end{equation}
% As the MS-Blocks are interspersed in EMS-Block1 and EMS-Block2, where the $2^{th}$moment of the output is fixed in $1$, we can get  $\phi(\boldsymbol{J}_j {\boldsymbol{J}_j}^T)=\frac{1+2}{2}$. This is because of the output of residual path have a $2^{th}$moment that equals to $1$ while the shortcut path is $2$.

% Thus, once the input of EMS-ResNet has the $\alpha_{2}^{0}=3$, $\phi(\boldsymbol{JJ^{T}})\approx 1$ is satisfied.
\begin{proof}
It is detailed in \textbf{Supplementary Material Proof A.3}.
\end{proof} 
\textbf{Discussion about the proposition.} According to \cite{chen2020tpami}, $\phi(\boldsymbol{J}\boldsymbol{J}^{T})\approx1$ is the formal expression for GNE which ensures gradient vanishing or explosion will not happen. However, it is enough for solving gradient problems by just avoiding the exponential increase or decrease~\cite{chen2020tpami}. In our backbone, the MS-Block, the only factor to evoke exponential increase of gradient, whose increasing is stopped by the interspersing of EMS-Blocks. Even if the initialize of BN do not satisfy the constant for $\phi(\boldsymbol{J}\boldsymbol{J}^{T})\approx1$, the gradient of each block will not increase as the network goes deeper. Briefly, we improve the property of network at a structural level by our EMS-Blocks.

\section{Experiments}
To fully validate the effectiveness of our model, we conduct experiments on the frame-based COCO2017 Dataset~\cite{lin2014microsoft} and the event-based Gen1 Dataset~\cite{de2020large}. The detailed introduction of the Datasets are in the \textbf{Supplementary Material B}. For the object detection task, the most widely used model evaluation metric is the mean Average Precision (mAP). We report the mAP at IOU=0.5 (mAP@0.5) and the average AP between 0.5 and 0.95 (mAP@0.5:0.95). 
\subsection{Experimental Setup}\label{experiment_setup}
In all experiments, we set the reset value $V_{reset}$ of LIF neurons to 0, the membrane time constant $\tau$ to 0.25, the threshold $V_{th}$ to 0.5, and the coefficient $\alpha$ to 1. We mainly train models on 8 NVIDIA RTX3090 GPUs and adopt the SGD optimizer, the learning rate is set to $1e^{-2}$. The network is trained for 300 epochs on the COCO2017 Dataset with a batch size of 32. On the Gen1 Dataset, we trained the model for 100 epochs, with the batch size of 128. 

\subsection{Effective Test}

\paragraph{COCO2017 Dataset} Our models are trained on Mosic data augmentation techniques~\cite{bochkovskiy2020yolov4} and tested to detect 80 objects of the COCO Dataset~\cite{lin2014microsoft} on the validation set. As the first one to implement object detection task with directly trained SNNs, the main quantitative results are presented in Table~\ref{tab:coco1}. For comparison with the current best performance model Spike Calibration~\cite{li2022spike} based on ANN-SNN conversion method, we do experiments based on EMS-ResNet34, which has a reasonable number of parameters compared with ResNet50. 
We demonstrate that the directly training method can achieve higher performance at only 4 time steps, while the ANN-SNN conversion approach requires at least a several hundred time steps. Furthermore, with the same experimental settings and data augmentation methods, the performance of our model is comparable to that of the ANN with the same structure while reducing \textbf{5.83$\times$} of energy consumption.

\begin{table}[htbp]\footnotesize
  \centering
 
  \setlength{\tabcolsep}{1mm}
    \begin{tabular}{ccccccc}
    \toprule
    \multirow{2}[2]{*}{Method} & \multirow{2}[2]{*}{Model} & \multirow{2}[2]{*}{Params} & \multirow{2}[2]{*}{T} & Firing & mAP   & mAP \\
          &       &       &       & Rate  & @0.5:0.95 & @0.5 \\
    \midrule
    \multirow{6}[2]{*}{Sew-Reset} & MobileNet & \multirow{2}[1]{*}{12.64M} & \multirow{2}[1]{*}{5} & \multirow{2}[1]{*}{22.22\%} & \multirow{2}[1]{*}{0.174} & \multirow{2}[1]{*}{/} \\
          & -64+SSD &       &       &       &       &  \\
          & DenseNet & \multirow{2}[0]{*}{24.26M} & \multirow{2}[0]{*}{5} & \multirow{2}[0]{*}{29.44\%} & \multirow{2}[0]{*}{0.147} & \multirow{2}[0]{*}{/} \\
          & 121-24+SSD &       &       &       &       &  \\
          & VGG   & \multirow{2}[1]{*}{8.20M} & \multirow{2}[1]{*}{5} & \multirow{2}[1]{*}{37.20\%} & \multirow{2}[1]{*}{0.189} & \multirow{2}[1]{*}{/} \\
          & -11+SSD &       &       &       &       &  \\
    \midrule
    ANN   & ResNet10 & \multirow{2}[2]{*}{\textbf{6.20M}} & /     & /     & 0.247 & 0.504 \\
    EMS-ResNet & EMS-Res10 &       & 5     & \textbf{21.15\%} & \textbf{0.267} & \textbf{0.547} \\
    \bottomrule
    \end{tabular}%
    \vspace{1mm}
    \caption{Results on Gen1 Dataset.}
    \vspace{-5mm}
    \label{tab:gen1data}
\end{table}%

\paragraph{GEN1 Automotive Detection Dataset} As the largest event camera-based dataset, it contains two categories (pedestrians and cars)~\cite{de2020large}. Currently, only Cordone \etal.~\cite{cordone2022object} have experimented with the SNN method so far. They divided the event data into five temporal bins for each input to the network. In this paper, all the experiments on the Gen1 Dataset follow the same experimental setup with them.

The backbone of their shallow model is based on the lightweight structure of Sew-ResNet, and the detection head uses SSD. For fair comparison, we train a model with only 6.20 M parameters based on the EMS-ResNet10 structure while reducing the number of channels. The results in Table~\ref{tab:gen1data} show that our method can achieve the mAP@0.5:0.95 of 0.267 at the same number of time steps, which far outperforms their performance of 0.189. It is remarkable that our full-spike model is more sparse, with a spike firing rate of 21.15\%, which implies lower energy consumption. We also conducted comparison experiments with an ANN-ResNet10~\cite{gong2022resnet10} of the same structure with reduced number of channels and discovered we even achieved better performance than the ANN network, which indicates that the SNN network may be more competitive in processing event data.

\begin{figure}[h]
%是可选项 h表示的是here在这里插入，t表示的是在页面的顶部插入
\centering
\includegraphics[scale=0.27]{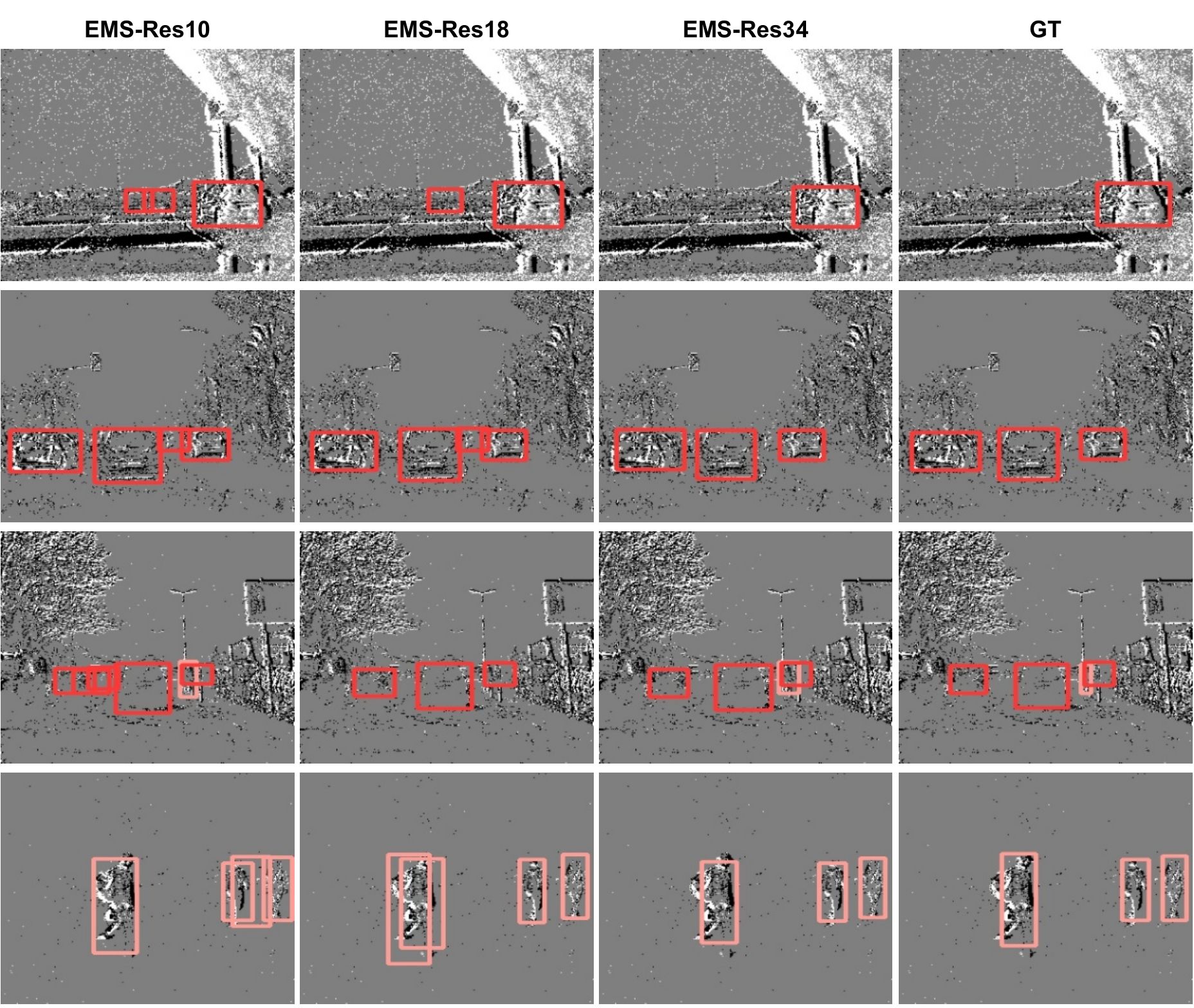}
\caption{\textbf{Object detection results on the Gen1 Dataset.} } 
\label{fig:Gen1_result}
\end{figure}

\subsection{Ablation Experiments}\label{ablations}
We conducted ablation experiments to better understand the effectiveness of different residual modules on reducing energy consumption, the effect of network depth and time step on model performance. All ablation experiments are performed by following the experimental setup in Sec~\ref{experiment_setup}, if not otherwise specified.

\begin{table}[h]
    
    \small
    \centering
    \resizebox{240pt}{35pt}{
    \begin{tabular}{cccccc}
    \hline
    {Model} &{\thead{mAP\\@0.5}} &{\thead{mAP\\@0.5:0.95}}&{Params} &\thead{Firing\\ Rate}&\thead{Energy\\Efficiency}\\
    \hline
    {ANN-Res18}&{0.537}&{0.290}&{9.56M}&{/}&{1$\times$}\\
    {MS-Res18}&{0.560}&{0.285}&{9.49M}&{17.08\%}&{2.43$\times$}\\
    {Sew-Res18}&{0.561}&{0.286}&{9.56M}&{18.80\%}&{2.00$\times$*}\\
    {EMS-Res18}&{\textbf{0.565}}&{0.286}&{\bf{9.34M}}&{20.09\%}&{\bf{4.91$\times$}}\\
    \hline
    \end{tabular}
    }
    \vspace{1mm}
    \caption{Impact of different residual blocks on Gen1 Dataset. *Non-spike convolution blocks are calculated as MAC operations.}
    \vspace{-4mm}
    \label{tab:diff_block}
\end{table}

\begin{figure*}[!t]
%是可选项 h表示的是here在这里插入，t表示的是在页面的顶部插入
\centering
\includegraphics[scale=0.55]{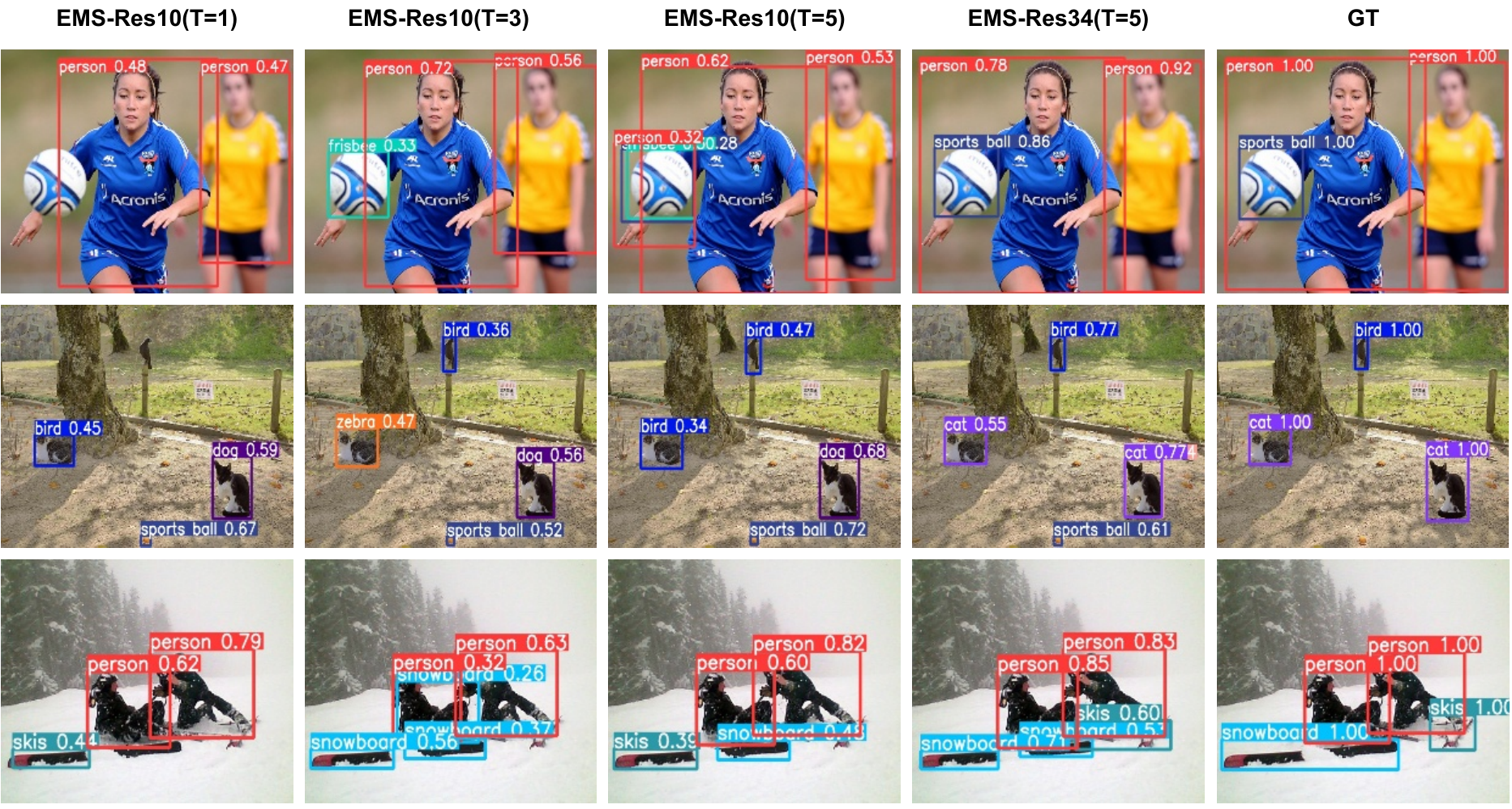}
\caption{{\bf{Object detection results on the COCO Dataset}}. The first three columns compare the effect of time steps on performance for the same network structure. The third and fourth columns compare the effect of the depth of the network on performance.}
\label{fig:COCO_result}
\end{figure*}

\paragraph{Different Residual Blocks}
To explore the structural superiority of our EMS-Block, we made comparative experiments based on the Gen1 Dataset (shown in Table~\ref{tab:diff_block}). All experiments are trained with only 50 epochs, and the batchsize is set to 64. From the experimental results, we found that the performance of our full-spike EMS-ResNet is comparable to those of the other two ResNets with non-spikes and the sparsity of the network is also maintained. The full-spike property of EMS-ResNet enables the network to be energy efficiency. According to Equation~\ref{energycost}, we calculated the energy consumption of the network, excluding the convolutional energy consumption from the first coding layer. The energy consumption on ANN-ResNet18 is around $9.65mJ$, which we denote this baseline energy consumption as $1\times$. Our EMS-Res18 can reduce up to $4.91\times$ of energy consumption.

\begin{table}[h]
    \small
    \centering
    % \resizebox{240pt}{25pt}{
    \begin{tabular}{ccccc}
    \hline
    {Model} &\thead{mAP\\@0.5} &\thead{mAP\\@0.5:0.95}&{Params}&\thead{Firing\\ Rate} \\ 
    \hline
    {EMS-Res10}&{0.547}&{0.267}&{6.20M}&{21.15\%}\\
    {EMS-Res18}&{0.565}&{0.286}&{9.34M}&{20.09\%}\\
    {EMS-Res34}&{0.590}&{0.310}&{14.40M}&{17.80\%}\\
    \hline
    
    \end{tabular}
    % }
    \vspace{1mm}
    \caption{Ablation studies of different numbers of residual blocks on Gen1 Dataset.}
    \vspace{-7mm}
    \label{tab:num_blocks}
\end{table}

\paragraph{Numbers of Residual Blocks} 
In Sec~\ref{sec.gradient}, we theoretically analyze that our EMS-ResNet can achieve deep training. Here we report results in Table~\ref{tab:num_blocks} on the Gen1 Dataset based on EMS-Res10, EMS-Res18, and EMS-Res34, respectively. When the scale of the network is larger, the feature extraction ability becomes stronger (see Figure~\ref{fig:Gen1_result}).

\paragraph{Size of Time Steps}
Due to the sparsity of event streams, different event sampling strategies may affect the ablation experiment of time steps on Gen1 Dataset. Thus, we report the performance of EMS-ResNet10 based on the COCO2017 Dataset for T = 1, 3, 5, 7 in Table~\ref{tab:size of time}. The time steps can be dynamically adjusted to achieve a balance of effectiveness and efficiency according to the needs of the actual task. We show some detection results on the test set compared in Figure~\ref{fig:COCO_result}, where it can be found that the accuracy of object detection is higher when the time step is longer. Here, we first train the model based on T=1, and then use the training model with T=1 as a pre-training model with multiple time steps, which can effectively reduce the training time.

\begin{table}[h]
    \small
    \centering
    {
    \begin{tabular}{ccccc}
    \hline
    {T} &{1} &{3}&{5}&{7}\\ 
    \hline
    {mAP@0.5}&{0.328}&{0.362}&{0.367} &{0.383}\\
    {mAP@0.5:0.95}&{0.162}&{0.184}&{0.189} &{0.199}\\
 
    \hline
    
    \end{tabular}
    }
    \vspace{1mm}
    \caption{Impact of different size of time steps on COCO Dataset.}
    \vspace{-5mm}
    \label{tab:size of time}
\end{table}

\section{Conclusion}
In this work, we are the first to use the deep directly trained SNN for the object detection task. Considering that object detection involves extracting multi-scale object features, we design a novel energy-efficient full-spike residual block, EMS-ResNet, that eliminates the redundant MAC operations generated by non-spike convolution on shortcut paths and residual connections. The full-spike EMS-ResNet makes it easier to deploy on neuromorphic chips. Moreover, our results show that our model (EMS-YOLO) can achieve comparable performance to that of a same-structure ANN in a very short time step and performs well on both static images and event data. We believe that our model will drive the exploration of SNN for various regression tasks while becoming possible in neuromorphic hardware deployment.

{\small
\bibliographystyle{ieee_fullname}
\bibliography{egbib}
}

\appendix
\section{Proof of Gradient Norm Equality}
\begin{definition}
\textbf{(General Linear Transform) }Let $f(x)$ be a transform whose Jacobian matrix is $\boldsymbol{J}$. f is called general linear transform when it satisfies:

\begin{equation}
    E\left[\frac{\|\boldsymbol{f}(\boldsymbol{x})\|_{2}^{2}}{\operatorname{len}(\boldsymbol{f}(\boldsymbol{x}))}\right]=\phi\left(\boldsymbol{J J}^{T}\right)E\left[\frac{\|\boldsymbol{x}\|_{2}^{2}}{\operatorname{len}(\boldsymbol{x})}\right] .
\end{equation}
\end{definition}

\begin{lemma}[Multiplication](Theorem 4.1 in~\cite{chen2020tpami})
Given $\boldsymbol{J}:=\prod_{j=L}^1 \boldsymbol{J}_j$, where $\{\boldsymbol{J}_j\in \mathbb{R}^{m_j\times m_{j-1}}\}$ is a series of independent random matrices. If $(\prod_{j=L}^1\boldsymbol{J}_j)(\prod_{j=L}^1\boldsymbol{J}_j)^T$ is at least the $1^{st}$ moment unitarily invariant, we have 
\begin{equation}
    \phi\left((\prod_{j=L}^1\boldsymbol{J}_j)(\prod_{j=L}^1 \boldsymbol{J}_j)^T \right)=\prod_{j=L}^1\phi(\boldsymbol{J}_j {\boldsymbol{J}_j}^T).
\end{equation}
\label{lemma:multiplication}
\end{lemma}

\begin{lemma}[Addition] (Theorem 4.2 in \cite{chen2020tpami}) Given $\boldsymbol{J}:=\prod_{j=L}^1 \boldsymbol{J}_j$, where $\{\boldsymbol{J}_j\in \mathbb{R}^{m_j\times m_{j-1}}\}$ is a series of independent random matrices. If at most one matrix in $\boldsymbol{J}_j$ is not a central matrix, we have
\begin{equation}
    \phi(\boldsymbol{JJ}^T)=\sum_j\phi(\boldsymbol{J}_j {\boldsymbol{J}_j}^T).
\end{equation}
\label{lemma:addition}
\end{lemma}

\begin{proposition}
For EMS-Block1 and EMS-Block2, the Jacobian matrix of the block can be represented as $\phi(\boldsymbol{J _{j}J_{j}^{T}})=\frac{2}{\alpha_{2}^{j-1}}$.
\end{proposition}

\begin{proof}
\textbf{proof of EMS-Block1.} Since the EMS-Blocks have 2 paths, the residual path and the shortcut path while separately name the Jacobian matrix of two paths (of block) as $\boldsymbol{J}_{res}$ and $\boldsymbol{J}_{sc}$. $l$ is the layer number of the block, and it will be omitted where there is no ambiguity.

For the residual path with $2$ LCB blocks, according to General Linear Transform, we have
$$
\alpha_{2}^{l, res} = \phi(\boldsymbol{J}_{res}\boldsymbol{J}_{res}^{T})\alpha_{2}^{l-1},
$$
The shortcut path is similar
$$
\alpha_{2}^{l, sc} = \phi(\boldsymbol{J}_{sc}\boldsymbol{J}_{sc}^{T})\alpha_{2}^{l-1},
$$
Here, $\alpha_{2}^{l-1}$ is the $2^{th}$ moment of the input data from $(l-1)^{th}$ block.
Because the initialized BN layer have the output with variance $1$ and mean $0$, $\alpha_{2}^{l, res}=\alpha_{2}^{l, sc}=1$.Thus
$$
\phi(\boldsymbol{J}_{res}\boldsymbol{J}_{res}^{T})=\frac{1}{\alpha_{2}^{l-1}},
$$
$$
\phi(\boldsymbol{J}_{sc}\boldsymbol{J}_{sc}^{T})=\frac{1}{\alpha_{2}^{l-1}}.
$$
By addition principle
\begin{align*}
 \phi(\boldsymbol{J}_{EMS-Block1}\boldsymbol{J}_{EMS-Block1}^{T})\\=\phi(\boldsymbol{J}_{res}\boldsymbol{J}_{res}^{T})+\phi(\boldsymbol{J}_{sc}\boldsymbol{J}_{sc}^{T})\\
 =\frac{2}{\alpha_{2}^{l-1}}.   
\end{align*}
\end{proof}

\begin{proof}
\textbf{proof of EMS-Block2.} Comparing with EMS-Block1, the EMS-Block2 extra have a concatenation at the shortcut path.

\begin{table*}[t]
\centering
\renewcommand\arraystretch{1.5}{
\setlength{\tabcolsep}{8mm}{
\begin{tabular}{cccc}
\hline
Stage                        & ResNet-10                                                     & ResNet-18                                                     & ResNet-34                                                     \\ \hline
Conv1                        & \multicolumn{3}{c}{3×3, 32, stride 2}                                                                                                                                                              \\
Conv2\_x                     & $\biggl[\begin{array}{c}\text{3x3, 32}\\ \text{3x3, 64}\end{array}\biggr]*1$ & $\biggl[\begin{array}{c}\text{3x3, 32}\\ \text{3x3, 64}\end{array}\biggr]*2$ & $\biggl[\begin{array}{c}\text{3x3,32}\\ \text{3x3, 64}\end{array}\biggr]*3$ \\ 
\multicolumn{1}{l}{Conv3\_x} & $\biggl[\begin{array}{c}\text{3x3, 64}\\ \text{3x3, 128}\end{array}\biggr]*1$& $\biggl[\begin{array}{c}\text{3x3, 64}\\ \text{3x3, 128}\end{array}\biggr]*2$ & $\biggl[\begin{array}{c}\text{3x3, 64}\\ \text{3x3, 128}\end{array}\biggr]*4$ \\
\multicolumn{1}{l}{Conv4\_x} &$\biggl[\begin{array}{c}\text{3x3, 128}\\ \text{3x3, 256}\end{array}\biggr]*1$ &$\biggl[\begin{array}{c}\text{3x3, 128}\\ \text{3x3, 256}\end{array}\biggr]*2$ & $\biggl[\begin{array}{c}\text{3x3, 128}\\ \text{3x3, 256}\end{array}\biggr]*6$ \\
Conv5\_x                     & $\biggl[\begin{array}{c}\text{3x3, 256}\\ \text{3x3, 512}\end{array}\biggr]*1$ & $\biggl[\begin{array}{c}\text{3x3, 256}\\ \text{3x3, 512}\end{array}\biggr]*2$ &$\biggl[\begin{array}{c}\text{3x3, 256}\\ \text{3x3, 512}\end{array}\biggr]*3$\\ \hline

\end{tabular}
}
}
\vspace{1mm}
\caption{\textbf{Model structures for ablation experiments.} x represents the current module repeated x times and the first module transformed in a reduced dimension. Compared with the original ResNet structure, the number of channels are resized here, while the final FC layer is removed.}
\vspace{-4mm}
\label{tab:resnet}
\end{table*}

According to the discussion about concatenation in~\cite{chen2020tpami}, we have
$$
\phi\left(\boldsymbol{J}_{j} \boldsymbol{J}_{j}^{T}\right)=\frac{c_{j-1}}{c_{j}}+\frac{\delta_{j}}{c_{j}} \phi\left(\boldsymbol{H}_{{j}} \boldsymbol{H}_{{j}}^{T}\right),
$$
Here $\boldsymbol{J}_{j}$ denote Jacobian matrix of the block of shortcut path without maxpooling layer. ${H}_{{j}}$ denote the Jacobian matrix of the LCB block. $c_{j-1}$ and $c_{j}$ denoted as the channel numbers for input and output of concatenation. And $\delta_{j}=c_{j}-c_{j-1}$. It is trivial that by adding the maxpooling layer and using general linear transform, shortcut path can be expressed as

\begin{align*}
\phi(\boldsymbol{J}_{sc}\boldsymbol{J}_{sc}^{T})=\frac{\alpha_{2}^{maxpool}}{\alpha_{2}^{l-1}}(\frac{c_{j-1}}{c_{j}}+\frac{\delta_{j}}{c_{j}} \phi\left(\boldsymbol{H}_{{j}} \boldsymbol{H}_{{j}}^{T}\right))\\
=\frac{1}{\alpha_{2}^{l-1}}(\frac{\alpha_{2}^{maxpool}c_{j-1}}{c_{j}}+\frac{\alpha_{2}^{maxpool}\delta_{j}}{c_{j}} \left(\frac{\alpha_{2}^{bn}}{\alpha_{2}^{maxpool}}\right)),
\end{align*}
Since the $2^{th}$ moment $\alpha_{2}^{l-1}$ is strictly controlled by the BN layers of former block, the $\alpha_{2}^{maxpool}$ is fixed too. Thus, let $\alpha_{2}^{bn}=\frac{2c_{j}-\alpha_{2}^{maxpool}c_{j-1}}{\delta_{j}}$ by proper initializing of BN layers, $\phi(\boldsymbol{J}_{sc}\boldsymbol{J}_{sc}^{T})=\frac{1}{\alpha_{2}^{l-1}}$ holds.

The other part of EMS-Block2 is similar with EMS-Block1, thus 
$$
\phi(\boldsymbol{J}_{EMSblock2}\boldsymbol{J}_{EMSblock2}^{T})=\frac{2}{\alpha_{2}^{l-1}}.
$$

\end{proof}

\begin{proposition}

For the EMS-ResNet, $\phi(\boldsymbol{JJ^{T}})\approx 1$ can be satisfied by control the $2^{th}$ moment of the input.
\end{proposition}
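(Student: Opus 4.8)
The plan is to exploit the fact that the EMS-ResNet is a purely serial concatenation of blocks, so that the global backward Jacobian factorizes as $\boldsymbol{J}=\prod_{j=L}^{1}\boldsymbol{J}_j$ and Lemma~\ref{lemma:multiplication} applies directly. First I would enumerate the block types that appear in the backbone: the EMS-Block1 and EMS-Block2 already analyzed, together with the interspersed MS-Blocks and the remaining general linear transforms (pooling, upsampling, concatenation, BN, CONV, LIF) that the preceding discussion establishes as general linear transforms. Since each such block $j$ is a general linear transform, its defining identity gives $\phi(\boldsymbol{J}_j\boldsymbol{J}_j^{T})=\alpha_{2}^{j}/\alpha_{2}^{j-1}$, where $\alpha_{2}^{j}$ denotes the $2^{th}$ moment at the output of block $j$.

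Next I would substitute these per-block ratios into the multiplication principle. Because the factors telescope,
\[
\phi(\boldsymbol{JJ}^{T})=\prod_{j=L}^{1}\phi(\boldsymbol{J}_j\boldsymbol{J}_j^{T})=\prod_{j=L}^{1}\frac{\alpha_{2}^{j}}{\alpha_{2}^{j-1}}=\frac{\alpha_{2}^{L}}{\alpha_{2}^{0}},
\]
so the entire question reduces to controlling just two quantities: the $2^{th}$ moment $\alpha_{2}^{0}$ of the network input and the $2^{th}$ moment $\alpha_{2}^{L}$ of the final feature map. This is exactly where the block-level result $\phi(\boldsymbol{J}_j\boldsymbol{J}_j^{T})=2/\alpha_{2}^{j-1}$ for the EMS-Blocks does the work: it forces $\alpha_{2}^{j}=2$ at the output of every EMS-Block, \emph{independently of its input}. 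Thus each EMS-Block pins the running moment back to the constant $2$, and since the backbone is dominated by and terminates in EMS-Blocks, $\alpha_{2}^{L}$ is a fixed constant rather than an exponentially drifting quantity.

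To handle the MS-Blocks I would compute their contribution directly via Lemma~\ref{lemma:addition}: the identity shortcut contributes $1$ and the BN-normalized residual path contributes $1/\alpha_{2}^{j-1}$, so $\phi_{\mathrm{MS}}=1+1/\alpha_{2}^{j-1}$ and correspondingly $\alpha_{2}^{j}=\alpha_{2}^{j-1}+1$. Left to accumulate across many layers this identity shortcut would drive the moment upward without bound, which is precisely the mechanism that can evoke gradient growth; but each following EMS-Block resets the moment to $2$, so the interspersing of the two block types keeps the telescoped product near unity. The argument then closes by controlling the input moment: normalizing or scaling the output of the first coding layer to $\alpha_{2}^{0}=2$ yields $\phi(\boldsymbol{JJ}^{T})=\alpha_{2}^{L}/\alpha_{2}^{0}\approx1$, which is the formal GNE condition.

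The hard part will be justifying the telescoping end-to-end, namely verifying that every intermediate operation genuinely satisfies the general-linear-transform identity so that the per-block ratios multiply cleanly, and that the unitary-invariance hypothesis of Lemma~\ref{lemma:multiplication} survives across the concatenation and downsampling layers. The second delicate point is making ``$\approx1$'' meaningful rather than exact: one must argue that bounded, non-exponential fluctuation of $\alpha_{2}^{L}$ around $2$ already suffices for gradient norm equality, invoking the observation that avoiding exponential increase or decay of the per-block factors is enough in practice even when the BN initialization does not enforce the constant exactly.
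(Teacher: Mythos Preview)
Your approach is essentially the paper's: write each block's $\phi(\boldsymbol{J}_j\boldsymbol{J}_j^{T})$ as the moment ratio $\alpha_2^{j}/\alpha_2^{j-1}$ via the general-linear-transform identity, combine via the multiplication principle, observe that every EMS-Block pins the running second moment back to the constant $2$, compute the MS-Block update $\alpha_2^{j}=\alpha_2^{j-1}+1$, and finally tune the encoding-layer BN so that the telescoped ratio is $\approx 1$. Your explicit telescoping is a slightly tidier packaging of exactly the computation the paper carries out.

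One factual slip worth fixing: in the EMS-ResNet each EMS-Module is an EMS-Block \emph{followed} by an MS-Block (see Figure~\ref{fig:ODSNN}), so the serial chain terminates in an MS-Block, not an EMS-Block as you assert. Hence the last moment is $\alpha_2^{L}=2+1=3$, not $2$; the paper accordingly obtains $\phi(\boldsymbol{JJ}^{T})=3/\alpha_2^{0}$ and sets $\alpha_2^{0}=3$ (not $2$) at the encoding BN. This only changes the constant you must control, not the structure of your argument.
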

\begin{proof}
MS-Block is a typical resblock that have already been discussed in \cite{chen2020tpami}. Using general linear transform and addition principle, we have
$$
\alpha_{2}^{l-1}\phi(\boldsymbol{J}\boldsymbol{J}^{T})=\alpha_{2}^{l}=\alpha_{2}^{l-1}+1.
$$
And $\alpha_{2}^{l-1}$ is comes from the EMS-Block1 or EMS-Block2, where $\alpha_{2}^{l-1}$ is fixed at $2$. Thus, $\phi(\boldsymbol{J}_{MS-Block}\boldsymbol{J}_{MS-Block}^{T})=\frac{3}{2}$.

By using multiplication principle, the whole blocks have the property
$$
\phi(\boldsymbol{J}\boldsymbol{J}^{T})=\frac{3}{\alpha_{2}^{0}},
$$
where $\alpha_{2}^{0}$ is the $2^{th}$ moment of the output of BN in encoding layer.

After initialized the BN in encoding layer, $\alpha_{2}^{0}$ can be controlled to $3$ and then $\phi(\boldsymbol{J}\boldsymbol{J}^{T})\approx1$ holds. 
\end{proof}

\section{Datasets Introduction}

\paragraph{COCO2017 Dataset}
 COCO2017 Dataset~\cite{lin2014microsoft} is a large-scale object detection, segmentation, key-point detection, and captioning dataset. For object detection, its training set and test set contain 118K and 5K images, respectively. The instances of 80 categories are labeled with their classes and bounding boxes respectively.
 \paragraph{GEN1 Automotive Detection Dataset}
Event cameras possess outstanding properties compared with the traditional frame cameras. They have high dynamic range to overcome motion blur. Furthermore, objects are captured well even in low-light or overexposed scenes. The event $e_n$ (defined in \textbf{Sec 4.1}) in the event camera represents the change in light intensity $I$ of the pixel $(x_n,y_n)$, which can be formulated as:
\begin{equation}
    \ln{I(x_n,y_n,t_n)}-\ln{I(x_n,y_n,t_n-\Delta{t_n})}=p_n\theta_{th}
\end{equation}
where $\Delta{t_n}$ represents the temporal sampling interval. 

As the largest event camera-based dataset currently available,  Gen1 dataset~\cite{de2020large} contains two categories (pedestrians and cars), and 39 hours of automotive recordings in diverse scenarios. Gen1 is labeled manually by the gray level estimation feature of the ATIS sensor~\cite{posch2010qvga} with a resolution of 304$\times$204 pixels and more than 255,000 bounding box annotations are yielded in total. 

\begin{figure*}[!t]
%是可选项 h表示的是here在这里插入，t表示的是在页面的顶部插入
\centering
\includegraphics[scale=0.5]{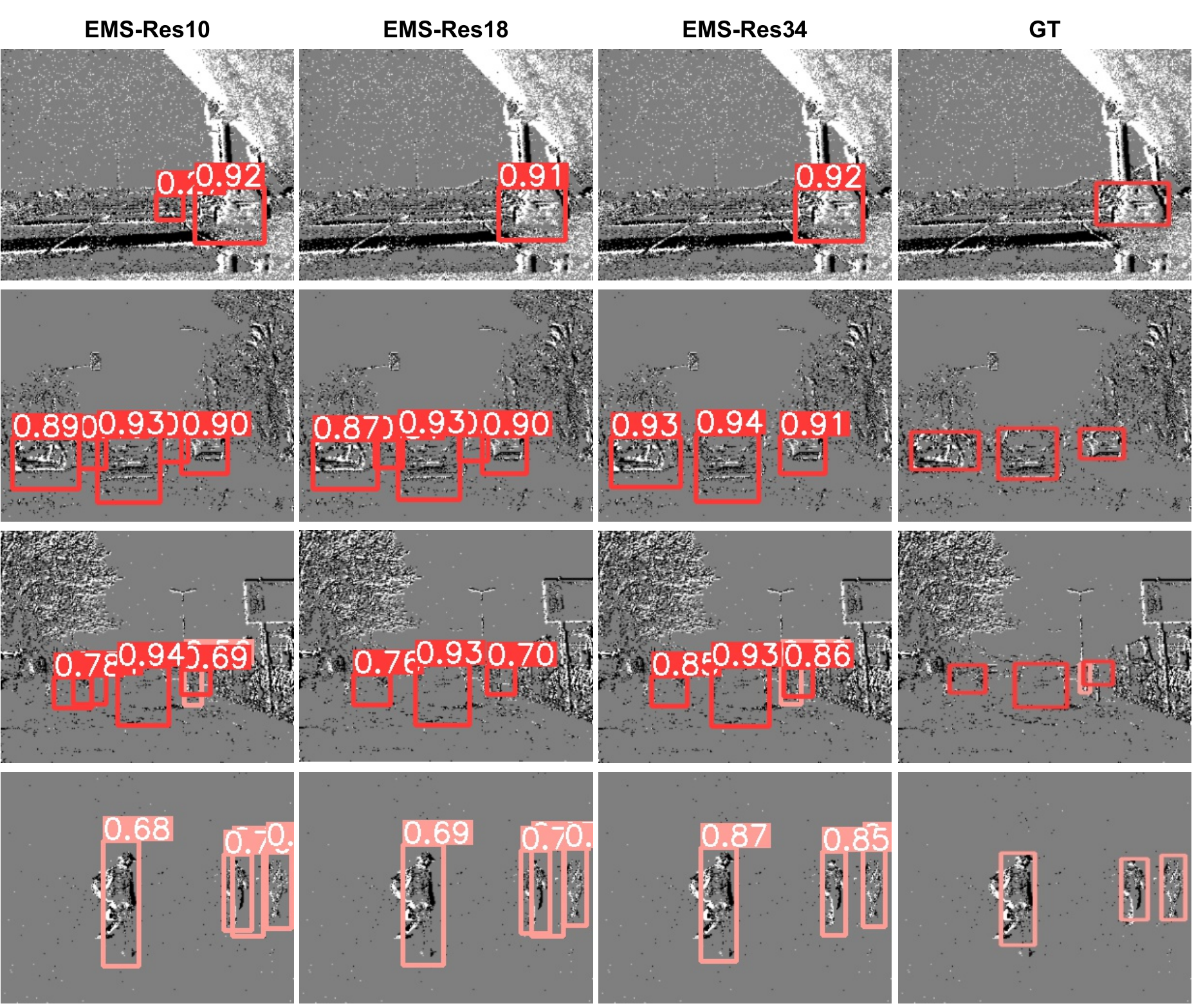}
\caption{{\textbf{More detailed comparison figures on the gen1 dataset}.}}
\label{fig:g1}
\end{figure*}

\section{More Detailed Experiments}
Here we provide more details on the experiments in \textbf{Sec 5.3} using the COCO2017 dataset. They are all based on the channel number reduction models (see Table~\ref{tab:resnet}), because we aim to validate the reliability of our experimental conclusions, not to achieve the optimal performance. In addition, we explore the impact of the last layer of LIF on the model performance. We use 4 Nvidia A100 GPUs and the SGD optimizer with a learning rate of 1E-2 for training.

\begin{table}[h]
\centering
\resizebox{240pt}{35pt}{
\begin{tabular}{@{}ccclcc@{}}
\toprule
Model     & \begin{tabular}[c]{@{}c@{}}mAP\\ @0.5\end{tabular} & \begin{tabular}[c]{@{}c@{}}mAP\\ @0.5:0.95\end{tabular} & \multicolumn{1}{c}{Params} & \begin{tabular}[c]{@{}c@{}}Firing\\ Rate\end{tabular} & \begin{tabular}[c]{@{}c@{}}Energy\\ Efficiency\end{tabular} \\ \midrule
Sew-Res18 & 0.345                                                & 0.183                                                     & 9.743M                     & 24.20\%                                                 & $3.31\times$                                                        \\
MS-Res18  & \multicolumn{1}{l}{0.345}                          & 0.184                                                   & 9.678M                     & 32.32\%                                                 & $3.55\times$                                                       \\
EMS-Res18 & \textbf{0.362}                                                & \textbf{0.201}                                                     & \textbf{9.523M  }                   & 38.75\%                                                 & \textbf{5.98}$\times$                                                      \\ \bottomrule
\end{tabular}
}
 \vspace{1mm}
\caption{Ablation studies of different residual blocks on COCO2017 dataset.}
\vspace{-4mm}
\label{tab:diffres_coco}
\end{table}
\paragraph{Different Residual Blocks}
We conduct additional experiments on the COCO2017 dataset to fully illustrate the effectiveness of EMS-ResNet. We train all the models for only 120 epochs with a batchsize of 64, and the time steps are set to 3. As shown in Table~\ref{tab:diffres_coco} , our model presents optimal performance with a high spiking rate, which may imply an increase in spiking rate, enabling better model feature extraction. At the same time, our model is fully spiked, and even with a high spike rate, it is still more energy-efficient than other models. We set the energy consumption of the ANN with the same structure as the baseline, denoted as $1\times$, and our full spike EMS-ResNet reduces the energy consumption up to 5.98 times.

\paragraph{Numbers of Residual Blocks}
We explore the effect of network depth on performance on the COCO2017 dataset. Here we set the time step to 1 and train for only 50 epochs. As shown in Table~\ref{diffnum_coco}, the network converges faster and recognizes objects more accurately as the depth increases.

\begin{table}[h]
\centering
\begin{tabular}{@{}ccccc@{}}
\toprule
Model     & \begin{tabular}[c]{@{}c@{}}mAP\\ @0.5\end{tabular} & \begin{tabular}[c]{@{}c@{}}mAP\\ @0.5:0.95\end{tabular} & Params & \begin{tabular}[c]{@{}c@{}}Firing\\ Rate\end{tabular} \\ \midrule
EMS-Res10 & 0.203                                              & 0.091                                                   & 6.387M  & 30.01\%                                                   \\
EMS-Res18 & 0.268                                              & 0.132                                                   & 9.523M & 28.56\%                                                   \\
EMS-Res34 & 0.335                                              & 0.178                                                   & 14.58M & 29.55\%                                                   \\ \bottomrule
\end{tabular}
\vspace{2mm}
\caption{Impact of different number of residual blocks on COCO2017 dataset.}
\vspace{-4mm}
\label{diffnum_coco}
\end{table}

\paragraph{Spiking Detection layer}
In the object detection task, it is necessary to consider how to convert the features of the spike trains into continuous value representations of the bounding box coordinates. This can be achieved by using either a non-spiking detection layer that directly feeds the last neuronal membrane potential or a spiking detection layer that uses rate-coding before different detection layers. From the experimental results (see Table~\ref{tab:diffdetction}), these two conversion methods have little effect on the performance of the model.

\begin{table}[h]
\centering
\begin{tabular}{@{}cccccc@{}}
\toprule
Dataset               & Model     & T & Params & \begin{tabular}[c]{@{}c@{}}mAP\\ @0.5\end{tabular} & \begin{tabular}[c]{@{}c@{}}mAP\\ @0.5:0.95\end{tabular} \\ \midrule
\multirow{2}{*}{COCO} &  Non-spiking & 3 & 9.523M & 0.318                                              & 0.165                                                   \\
                      & Spiking     & 3 & 9.523M & 0.305                                              & 0.157                                                   \\
\multirow{2}{*}{Gen1} & Non-spiking & 5 & 9.343M & 0.566                                              & 0.286                                                   \\
                      & Spiking     & 5 & 9.343M & 0.565                                              & 0.286                                                   \\ \bottomrule
\end{tabular}
\vspace{1mm}
\caption{Impact of spiking/non-spiking detection layer on the model performance.}
\vspace{-4mm}
\label{tab:diffdetction}

\end{table}

\end{document}